\pdfoutput=1
\documentclass[11pt,a4paper,logo]{lumia}

\usepackage[sort&compress]{natbib}
\usepackage{subcaption}
\usepackage{mathtools}
\usepackage{multirow}
\usepackage{makecell}
\usepackage{wrapfig}
\usepackage{placeins}
\usepackage{array}
\usepackage{float}
\usepackage[capitalize,noabbrev]{cleveref}
\crefname{appendix}{Appendix}{Appendices}
\Crefname{appendix}{Appendix}{Appendices}

\definecolor{abstractbg}{HTML}{F2F5F8}

\newcommand{\EE}{\mathbb{E}}
\newcommand{\PP}{\mathbb{P}}

\newcommand{\cA}{\mathcal{A}}
\newcommand{\cH}{\mathcal{H}}
\newcommand{\cD}{\mathcal{D}}
\newcommand{\pfail}{p_{\mathrm{fail}}}
\newcommand{\tauadv}{\tau}
\newcommand{\EU}{\mathrm{EU}}
\newcommand{\NMG}{\mathrm{NMG}}
\newcommand{\Gap}{\mathrm{Gap}}
\newcommand{\LCB}{\mathrm{LCB}}

\newtheorem{proposition}{Proposition}

\newtheorem{definition}{Definition}

\setheadertitle{Calibration Is Not Control: Why LLM-Agent Oversight Needs Intervention}
\title{Calibration Is Not Control: Why LLM-Agent Oversight Needs Intervention}
\correspondingemail{\emailicon~\href{mailto:yu_xingrui@a-star.edu.sg}{yu\_xingrui@a-star.edu.sg} \quad $^\ddagger$ Corresponding author.}

\author{%
Chubin Zhang$^1$, Zhenglin Wan$^2$, Xingrui Yu$^{3,4\ddagger}$, Jingxuan Wu$^5$, Qi Wen$^2$, Pengfei Zhou$^2$, Wangbo Zhao$^2$, Ivor Tsang$^{1,3,4}$\\
$^1$ Nanyang Technological University, Singapore\\
$^2$ National University of Singapore, Singapore\\
$^3$ CFAR, Agency for Science, Technology and Research, Singapore\\
$^4$ IHPC, Agency for Science, Technology and Research, Singapore\\
$^5$ Department of Statistics and Operations Research, UNC-Chapel Hill, United States
}

\begin{document}

\begin{abstract}
Runtime oversight for LLM agents is commonly framed as scalar risk prediction: estimate failure likelihood, confidence, or uncertainty, then intervene once the score crosses a threshold. We argue that this framing targets the wrong object for control. The relevant question is not how likely the agent is to fail if it continues, but whether an available intervention would improve the outcome. Two trajectory prefixes can have the same risk estimate while requiring different actions, because one remains recoverable and the other does not. We formalize this mismatch as \emph{target error} and identify \emph{intervention advantage}, the expected utility gain from intervening rather than continuing, as the decision object for oversight. To measure this mismatch, we introduce \emph{prefix branching}, a same-prefix counterfactual protocol that executes candidate actions from identical trajectory states. Across four benchmarks, action-conditioned control yields regime-dependent gains over scalar routing. In a calibration decomposition, recalibrating the same scalar score improves prediction metrics but leaves control regret unchanged, showing that calibration alone does not repair target error. A simple prefix-only action-conditioned controller substantially reduces regret in the strongest interactive regime, from $0.506$ to $0.110$ on ALFWorld. Gains shrink when interventions are weak or when scalar routing already preserves intervention-relevant information. These results suggest that LLM-agent oversight should move from calibrated risk scoring toward action-conditioned value estimation.
\end{abstract}

\maketitle
\section{Introduction}
\label{sec:intro}

As LLM agents move from single-turn generation to multi-step interaction, oversight systems must make online control decisions rather than merely evaluate final outputs. At each intermediate state, the system may let the agent continue, ask for verification, defer to a stronger policy, repair the trajectory, or stop. A common way to operationalize this decision is scalar risk prediction: map the trajectory prefix to a single score, such as failure probability, uncertainty, or confidence, and intervene when the score crosses a threshold~\citep{kadavath2022language,farquhar2024detecting,xuan2026confidence_dichotomy,zhang2026agentic_calibration,ren2023knowno}. This framing has encouraged progress on making risk and confidence estimates more accurate and better calibrated.

\begin{figure*}[t]
    \centering
    \includegraphics[width=\textwidth]{figures/overview_concept.pdf}
    \caption{
    Overview of the target mismatch. Scalar-risk oversight predicts what is likely to happen if the agent continues and applies a threshold rule. Yet states with the same continuation risk can differ in recoverability and require different actions. We instead frame oversight as action-conditioned control: compare the downstream value of available actions and select the action with highest expected utility.
    }
    \label{fig:overview}
\end{figure*}

Yet calibration alone does not make a risk score a control target. A risk score answers a one-action question: what is likely to happen if the agent continues? Runtime oversight must answer a different question: which available action would lead to the best downstream outcome? This distinction matters because high risk does not imply that intervention is useful. Consider two trajectory prefixes with the same failure-risk estimate. In one state, the task is difficult but still recoverable, so deferring to an expert would rescue the episode. In another, the task is equally high-risk but no longer recoverable, so even the best available intervention would fail and quitting is preferable. A controller that routes both states through the same scalar must treat them identically, even though their optimal actions differ. Thus, the score can be perfectly calibrated for predicting failure under continuation while remaining blind to the control-relevant distinction: whether the trajectory can still be rescued.

This mismatch is not a peculiarity of the example; it exposes a general failure mode. Continuation risk describes the expected outcome of one action, namely letting the agent proceed. Oversight, however, must compare the consequences of multiple actions. The control-relevant quantity is therefore \emph{intervention advantage}: the expected utility gain from intervening rather than continuing. A failure score may correlate with this quantity, but it need not preserve the decision boundary because recoverability depends on how the current state interacts with the available intervention. Consequently, improving or recalibrating the same score cannot recover information that the score never represented. We call this failure mode \emph{target error}: the scalar is accurate for one predictive target but insufficient for the downstream control decision. In \Cref{sec:theory}, we formalize the corresponding requirement as \emph{control sufficiency}.

Crucially, target error does not imply that scalar routing will always fail in deployment. Its practical cost depends on two conditions. First, the available intervention must have enough value to change the downstream outcome; if intervention is no better than continuing, a richer controller has little room to help. Second, the scalar must discard information relevant to intervention advantage; if the scalar already separates states with different optimal actions, threshold routing can remain adequate despite its compression. Thus, the question is not whether scalar risk is lossless in principle, but when its abstraction loss becomes consequential for control.

To study when this abstraction loss matters, we need both a formal criterion and counterfactual evidence. We first formalize control sufficiency for arbitrary scalar summaries and quantify the loss incurred when a scalar merges states that require different actions (\Cref{sec:theory}). However, the size of this loss cannot be read from ordinary logged trajectories: a trajectory reveals only the outcome of the action that was actually taken, not what would have happened under an alternative intervention from the same state. We therefore introduce \emph{prefix branching}, a same-prefix counterfactual protocol that replays an agent to a decision prefix and executes each candidate action from that identical state (\Cref{sec:protocol}). This lets us measure oracle actions, scalar abstraction loss, and controller regret without granting any controller access to future information. We apply this protocol across four benchmarks to compare scalar routing with action-conditioned control (\Cref{sec:results}) and to identify the regimes in which the richer decision target actually improves oversight.

This perspective separates our work from calibration-centered approaches to agent oversight. Prior work has improved confidence or failure prediction for language models and agents, and several recent studies in agent oversight show that accurate failure prediction need not imply effective failure prevention~\citep{kadavath2022language,farquhar2024detecting,zhang2026agentic_calibration,vasudev2026failure_prevention}. We ask a prior question: before calibrating a scalar, is that scalar the right target for the control decision it is meant to support? Our results show that the answer depends not only on predictive accuracy, but on whether the scalar preserves intervention-relevant information. We make the following contributions:
\begin{itemize}[leftmargin=1.2em,topsep=1pt,partopsep=0pt,itemsep=1pt,parsep=0pt]
\item We identify a target error in scalar-risk oversight: calibrated continuation risk can fail to preserve recoverability or the sign of intervention advantage.

\item We formalize \emph{control sufficiency} and \emph{scalar abstraction loss}, and introduce \emph{prefix branching}, a same-prefix counterfactual protocol for measuring intervention advantage and evaluating controllers without lookahead.

\item Across four benchmarks, we show that action-conditioned control yields gains in the regimes where interventions remain valuable and scalar scores hide recoverability; a calibration decomposition further shows that improving prediction metrics for the same scalar does not repair control regret.
\end{itemize}

\section{When is a scalar sufficient for control?}
\label{sec:theory}

The introduction identified a target mismatch: a scalar can be well calibrated for prediction yet insufficient for choosing interventions. This section makes that mismatch precise. We characterize what a scalar must preserve for lossless intervention control, how much value is lost when it fails, and why scalar routing can still remain adequate in finite-data regimes.

\subsection{Prediction targets and control targets}
\label{sec:setting}

Consider an LLM agent executing a multi-step trajectory in an interactive environment. At each prefix state $h \in \cH$, an external controller observes the trajectory so far and chooses between \texttt{continue} and candidate actions $a_1,\ldots,a_K$. The action \texttt{continue} lets the agent proceed, while $a_1,\ldots,a_K$ denote possible non-continue actions such as expert handoff, tool use, verification, repair, or stopping.

We begin with the binary abstraction $\cA=\{\texttt{continue},\texttt{intervene}\}$, where $\texttt{intervene}$ denotes the best available non-continue action, including stopping when stopping is available. This abstraction is not meant to remove action selection from the full oversight problem. Rather, it isolates the first-order control question: whether any available non-continue action has positive value relative to letting the agent continue. We return to multi-action selection in \Cref{sec:method}.

To separate prediction from control, we define one quantity for the risk of continuing and another for the value of intervening. Let $U_a(h)$ denote the random downstream utility obtained after choosing action $a \in \cA$ at prefix state $h$, and write $c=\texttt{continue}$ and $i=\texttt{intervene}$. We define
\begin{equation}
\label{eq:targets}
\EU_a(h)=\EE[U_a(h)\mid h],\;
\pfail(h)=\PP(U_c(h)\le 0\mid h),\;
\tauadv(h)=\EU_i(h)-\EU_c(h),\;
A^*(h)=\operatorname*{arg\,max}_{a\in\cA}\EU_a(h).
\end{equation}

Here, $\pfail(h)$ is a prediction target: it asks whether continuation is likely to fail. By contrast, $\tauadv(h)$ is a control target: it asks whether intervention improves expected utility relative to continuation. This distinction formalizes the recoverability issue from the introduction. Distinguishing recoverable from irrecoverable high-risk states does not require a more accurate estimate of the same continuation-failure probability; it requires information about how an intervention would change the outcome. The key question is therefore whether a scalar signal can preserve enough information to recover the optimal action.

\begin{definition}[$g$-sufficiency for intervention]
\label{def:g_sufficient}
For a fixed scalar signal $g(h)$ available at prefix state $h$, we say that $g$ is \emph{sufficient for intervention} if there exists a measurable selector $d$ such that
\begin{equation}
\Pr_{h\sim\cD}\!\left[d(g(h)) \in A^*(h)\right] = 1.
\label{eq:g-sufficient}
\end{equation}
\end{definition}

In words, a scalar is sufficient if conditioning on that scalar never forces the controller to collapse states whose optimal actions differ. This definition applies to any candidate scalar. The dominant special case in current oversight pipelines is failure-trigger routing, where the controller bases its decision only on a continuation-failure estimate.

\begin{definition}[Failure-trigger controller]
\label{def:ft}
A \emph{failure-trigger controller} is any deterministic policy $\pi$ such that $\pi(h)=f(\pfail(h))$ for some measurable function $f:[0,1]\to\cA$.
\end{definition}

Throughout, for any fixed scalar, we refer to the strongest controller that depends only on that scalar as the \emph{best controller measurable with respect to that scalar}.

\subsection{When is a candidate scalar sufficient?}
\label{sec:sufficiency}

Can a scalar carry enough information for lossless intervention routing? Definition~\ref{def:g_sufficient} says that this is possible only if the scalar preserves the information needed to choose the optimal action. The following proposition makes that requirement explicit.

\begin{proposition}[Lossless routing and intervention advantage]
\label{prop:sufficiency}
Under the binary action set $\cA=\{\texttt{continue},\texttt{intervene}\}$, a fixed scalar signal $g$ supports lossless routing if and only if the sign of intervention advantage can be recovered from $g$, up to the tie case $\tauadv(h)=0$. Equivalently, there exists a $g$-measurable intervention region $E$ such that, up to $\cD$-null sets,
\begin{equation}
\{h:\tauadv(h)>0\} \subseteq E \subseteq \{h:\tauadv(h)\ge 0\}.
\label{eq:suff-region}
\end{equation}
Equivalently, within each level set of $g$, one cannot mix states where \texttt{continue} is uniquely optimal with states where \texttt{intervene} is uniquely optimal.
\end{proposition}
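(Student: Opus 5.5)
The plan is to prove the equivalence in \eqref{eq:suff-region} directly from Definition~\ref{def:g_sufficient}, treating ``lossless routing'' as $g$-sufficiency. In the binary case we have $A^*(h)=\{\texttt{intervene}\}$ when $\tauadv(h)>0$, $A^*(h)=\{\texttt{continue}\}$ when $\tauadv(h)<0$, and $A^*(h)=\cA$ when $\tauadv(h)=0$. Any deterministic selector $d$ on the range of $g$ is therefore encoded by the preimage set $B=d^{-1}(\texttt{intervene})$, and it induces the $g$-measurable region $E=g^{-1}(B)=\{h: d(g(h))=\texttt{intervene}\}$. The whole argument consists of translating ``$d(g(h))\in A^*(h)$ almost surely'' into set inclusions for $E$.

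\textbf{($\Rightarrow$)} Assume $d$ satisfies \eqref{eq:g-sufficient}, and set $E=g^{-1}(d^{-1}(\texttt{intervene}))$. This set is $g$-measurable because $d$ is measurable.
\begin{itemize}
\item If $\tauadv(h)>0$ and $h\notin E$, then $d(g(h))=\texttt{continue}\notin A^*(h)$. So $\{\tauadv>0\}\setminus E$ is contained in a $\cD$-null set.
\item If $h\in E$ and $\tauadv(h)<0$, then $d(g(h))=\texttt{intervene}\notin A^*(h)$. So $E\cap\{\tauadv<0\}$ is null.
\end{itemize}
Together these give \eqref{eq:suff-region} up to null sets.

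\textbf{($\Leftarrow$)} Given a $g$-measurable $E$, write $E=g^{-1}(B)$ for a Borel $B\subseteq\mathbb{R}$. This is the Doob--Dynkin step: every set in $\sigma(g)$ has this form. Define $d(x)=\texttt{intervene}$ for $x\in B$ and $d(x)=\texttt{continue}$ otherwise. Outside the two null exceptional sets we check each case:
\begin{itemize}
\item If $\tauadv(h)>0$, then $h\in E$, so $d(g(h))=\texttt{intervene}\in A^*(h)$.
\item If $\tauadv(h)<0$, then $h\notin E$, so the choice is $\texttt{continue}$, which is optimal.
\item If $\tauadv(h)=0$, both actions are optimal.
\end{itemize}
Hence \eqref{eq:g-sufficient} holds.

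\textbf{Level-set formulation.} Write $P=\{\tauadv>0\}$ and $N=\{\tauadv<0\}$. Condition \eqref{eq:suff-region} says exactly that some $\sigma(g)$-set contains $P$ and is disjoint from $N$, modulo null sets. When this holds, each level set $g^{-1}(x)$ lies either inside $E$ or inside $E^c$. So, up to null sets, no level set contains points of both $P$ and $N$. For the converse, I would take $B=\{x:\PP(P\mid g=x)>0\}$, or more cleanly $B=\{x:\PP(N\mid g=x)=0\}$, using a regular conditional distribution. The no-mixing hypothesis, read as ``$\PP(P\mid g)\,\PP(N\mid g)=0$ a.s.'', then gives \eqref{eq:suff-region} after integrating over $g$.

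\textbf{Main obstacle.} The measure-theoretic care in the level-set equivalence is the hardest part. Individual level sets are typically $\cD$-null, so ``no mixing within a level set'' has to be interpreted through conditional probabilities given $g$, not pointwise. I would state that interpretation explicitly, and I would assume $\tauadv$ is measurable and $\cH$ is standard Borel so that regular conditional distributions exist. Everything else is routine bookkeeping of null sets.
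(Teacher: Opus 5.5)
Your proposal is correct and follows essentially the same route as the paper's proof: you encode the selector by the intervention region $E=\{h: d(g(h))=\texttt{intervene}\}$, read off the inclusions from the three cases $\tauadv(h)>0$, $\tauadv(h)<0$, $\tauadv(h)=0$, and get the converse by intervening on $E$ and continuing off it. You are more careful than the paper, which neither writes $E=g^{-1}(B)$ to produce the selector $d$ nor proves the level-set clause. Your conditional-probability reading of that clause is the right one, and it needs neither regular conditional distributions nor a standard Borel $\cH$, because Borel versions of $\PP(P\mid g)$ and $\PP(N\mid g)$ always exist for your two fixed events $P$ and $N$.
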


The proof appears in \Cref{app:proofs}; the proposition isolates the quantity that governs intervention decisions: intervention is worthwhile exactly when $\tauadv(h)>0$. Thus, a scalar need not represent the full trajectory state to be useful, but it must preserve the decision boundary induced by intervention advantage. If one further restricts attention to threshold policies on $g$, an additional monotonicity condition on the intervention region is needed.

This condition explains why failure risk can be predictive yet insufficient. Intervention advantage depends not only on how likely continuation is to fail, but also on recoverability, intervention cost, and how the intervention changes the downstream trajectory; a schematic decomposition appears in Appendix~\ref{app:tau_decomp}. Therefore, $\pfail(h)$ can correlate with $\tauadv(h)$ without determining its sign. Two states can share the same continuation risk while still requiring opposite actions, which is the mismatch illustrated schematically in \Cref{fig:overview}.

\Cref{prop:sufficiency} is still an all-or-nothing statement: it tells us whether a scalar \emph{can} be lossless, but not how much value is lost when it cannot. The next subsection quantifies that loss.

\subsection{How much is lost when sufficiency fails?}
\label{sec:abstraction_loss}

When sufficiency fails, the relevant quantity is the value lost by restricting the controller to a scalar summary. Let
\[
V^*=\EE\!\left[\max_{a\in\cA}\EU_a(h)\right],
\qquad
V_g=\EE\!\left[\max_{a\in\cA}\EE[\EU_a(h)\mid g(h)]\right],
\]
denote the value of the fully informed optimal controller and the best controller measurable with respect to $g$. We define the scalar abstraction loss as $\Gap(g)=V^*-V_g$.

This loss separates abstraction error from estimation error. The fully informed controller chooses the best action before averaging over states, whereas the scalar controller first averages together all states with the same scalar value and only then chooses an action. Thus, even with perfect conditional expectations, routing through $g$ incurs loss whenever $g$ merges states whose optimal actions disagree. In the special case $g(h)=\pfail(h)$, a perfectly calibrated failure score can still be lossy for control because it need not preserve the sign of $\tauadv(h)$. The full expression for $\Gap(g)$ and the zero-loss condition are given in \Cref{app:proofs}.

\subsection{When can scalar routing remain adequate?}
\label{sec:biasvar}

Control insufficiency does not imply that scalar routing is always poor in practice. Scalar routing can remain competitive when intervention value is low, when scalar mismatch is small, or when finite data make richer action-conditioned controllers harder to estimate. This is why scalar routing is not a strawman: whether structural loss or estimation variance dominates is regime-dependent. The next section introduces prefix branching to determine, from same-prefix counterfactual evidence, when the structural benefit of action-conditioned control outweighs its additional complexity.

\section{Measuring intervention advantage with prefix branching}
\label{sec:protocol}

The theory above identifies when scalar routing can lose control-relevant information, but it does not tell us whether that loss matters in a particular regime. To answer that empirical question, we need counterfactual evidence from the same decision state. A logged trajectory only reveals what happened after the action that was actually taken; it does not reveal whether continuing, intervening, or quitting would have produced higher utility from that same prefix. Offline relabeling is therefore insufficient, because the utility of an intervention depends on the downstream branch induced by that intervention.

We address this problem with \emph{prefix branching}: collect base trajectories, select decision prefixes, and execute every candidate action from each selected prefix; \Cref{fig:prefix_branching} in the appendix illustrates the protocol. Each branch yields a realized downstream utility, so the same prefix provides action-conditioned outcomes for \texttt{continue}, the deployed intervention, and \texttt{quit}. These outcomes let us compute oracle actions, controller utility, and regret under scalar routing.

Prefix branching is not itself a deployment policy or an online learning algorithm. It is a development-time evaluation protocol for exposing same-prefix action differences that ordinary logs cannot identify. The key design choice is that all comparisons are made from the same reconstructed prefix state. This avoids comparing controllers on different trajectories and isolates the value of the decision made at that prefix. On interactive benchmarks, we replay each prefix into the exact environment state reached by the base trajectory, verify step-by-step agreement, and discard any mismatch; the retained data have a replay match rate of $100\%$. We then split prefixes into train, validation, and test sets. To avoid leakage when multiple prefixes are sampled from the same base trajectory, all prefixes from the same base trajectory are assigned to the same split. Controllers are trained on the train split, hyperparameters are selected on validation, and all reported results use held-out test prefixes. Prefix branching therefore supplies the empirical objects used throughout the rest of the paper: oracle actions, realized controller utilities $U(\pi)$, control regret, and diagnostic measures of scalar mismatch.

\section{Action-conditioned control: witness and setup}
\label{sec:setup}

\subsection{A conservative witness controller}
\label{sec:method}

Given branched data, we need a controller that targets action-conditioned value rather than routing through a scalar risk score. Our goal is not to introduce a new controller architecture. Instead, we use a deliberately simple witness to test whether the decision target matters: if a prefix-only action-conditioned controller improves over scalar routing, the gain is evidence for targeting intervention advantage rather than for a sophisticated modeling choice. To keep the comparison focused on information access rather than lookahead, the witness uses only prefix-available information, the same information available to the scalar baseline. A preview-aware variant appears only as an auxiliary ablation.

The witness evaluates each candidate action at the current prefix, not just the continuation branch. For each action $a$, it predicts per-action success,
$\hat{p}_a(h) \approx \PP(\mathrm{success}\mid h,a)$,
from prefix-available features. On our tasks, the utility of action $a$ can be written in terms of an action-specific success utility $r_a^+$ and failure utility $r_a^-$. We therefore convert predicted success into predicted expected utility:
\begin{equation}
\widehat{\EU}_a(h)
=
\hat{p}_a(h) r_a^+ + \bigl(1-\hat{p}_a(h)\bigr) r_a^-.
\label{eq:eu_hat}
\end{equation}

Because richer control can incur higher estimation variance in finite data, we make the witness conservative. Specifically, we use a lower confidence bound that penalizes uncertain actions, following the broader logic of pessimistic decision-making under uncertainty~\citep{kumar2020cql,jin2021pessimism}:
\begin{equation}
\hat{p}^{\LCB}_a(h)=\hat{p}_a(h)-\beta_a\hat{\sigma}_a(h),\qquad
\widehat{\EU}^{\LCB}_a(h)=\hat{p}^{\LCB}_a(h) r_a^+ + \bigl(1-\hat{p}^{\LCB}_a(h)\bigr)r_a^- .
\label{eq:lcb}
\end{equation}
The controller selects $\arg\max_a \widehat{\EU}^{\LCB}_a(h)$. We implement the witness with a random forest because it provides per-tree variance estimates for the confidence bound while keeping the estimator simple and tuning-light. Alternative variants, including \emph{value}, \emph{explicit-EU}, and \emph{two-stage} controllers, are compared in \Cref{app:extended}.

\subsection{Benchmarks, interventions, and baselines}
\label{sec:benchmarks}

We choose benchmarks to span oversight regimes rather than to maximize average gains. ALFWorld~\citep{shridhar2021alfworld} and ScienceWorld~\citep{wang2022scienceworld} are interactive environments where intervention can change the downstream environment state. GSM8K~\citep{cobbe2021training} and HotpotQA~\citep{yang2018hotpotqa} are supporting reasoning benchmarks where intervention changes the reasoning branch but not a persistent environment state. This mix lets us test whether action-conditioned control helps only in strongly interactive settings or also when oversight operates over reasoning trajectories.

All four settings share the same three-way decision structure: \texttt{continue}, a benchmark-specific intervention, or \texttt{quit}. The deployed interventions are defer-to-expert in ALFWorld, gold-prefix handoff in ScienceWorld, and same-model verify/re-answer in GSM8K and HotpotQA. These interventions intentionally vary in strength, which lets us test the regime prediction from \Cref{sec:biasvar}: richer control should help most when intervention remains valuable and the scalar hides recoverability.

We use base agents spanning multiple capability tiers: Qwen2.5-7B-Instruct~\citep{qwen2025qwen25}, Llama-3.1-8B-Instruct~\citep{dubey2024llama3}, Qwen2.5-72B-Instruct~\citep{qwen2025qwen25}, and GPT-5.4 accessed through the public OpenAI API using the \texttt{gpt-5.4} model identifier~\citep{openai2026gpt54}. For the cross-benchmark comparison, we hold base capability roughly fixed at the 7--8B tier so that differences are driven by oversight regime rather than raw model strength. We use ALFWorld for the model-scale analysis because it is the benchmark where the control gap is largest. GPT-5.4 is included only in the model-scale analysis, with implementation details deferred to Appendix~\ref{app:implementation}.

The core scalar baseline is \emph{failure-trigger}, which instantiates the scalar-routing paradigm studied in \Cref{sec:theory}: it estimates $\pfail(h)$ and routes through that single scalar. Additional ablations, including alternative action-conditioned witnesses and scalar summaries, are deferred to \Cref{app:extended}.

\subsection{Evaluation metrics}
\label{sec:metric}

For each controller $\pi$, we compute realized utility $U(\pi)$ on the branched test prefixes and the corresponding \emph{control regret},
\begin{equation}
\mathrm{Regret}(\pi)
=
U(\pi_{\mathrm{oracle}}) - U(\pi),
\end{equation}
where $\pi_{\mathrm{oracle}}$ selects the best realized branch at each prefix. Lower regret means that the controller is closer to the same-prefix oracle induced by the branched outcomes. This is the operational counterpart of the theory: scalar abstraction loss appears as regret when a scalar controller merges states that require different actions. All significance tests use paired bootstrap over run-level differences with $10{,}000$ resamples and report $95\%$ confidence intervals.

\section{Results}
\label{sec:results}

The results address the paper's central question: when does forcing intervention decisions through a single scalar become consequential? We show that scalar routing can incur substantial regret, that recalibrating the same scalar does not fix the problem, and that the gap appears in predictable regimes: interventions must retain downstream value, and the scalar must hide information relevant to recoverability.

\subsection{Action-conditioned control can reduce regret}
\label{sec:main_result}

We first compare the two decision targets under matched information access. Both controllers operate on the same branched data and use only prefix-available information. The scalar baseline routes through a failure score, while the action-conditioned witness estimates the value of candidate actions at the prefix. Because each benchmark uses its own deployed intervention, absolute regret levels are most meaningful within a benchmark; across benchmarks, the relevant comparison is whether action-conditioned control improves over scalar routing.

\begin{table}[t]
\centering
\caption{
Comparator-aligned fork test under each benchmark's deployed intervention. Lower regret is better. $\Delta$ denotes regret reduction relative to scalar routing; bold $\Delta$ values have paired bootstrap $95\%$ CIs that exclude zero.
}
\label{tab:fork}
\small
\setlength{\tabcolsep}{5pt}
\begin{tabular}{lcccc}
\toprule
Benchmark & Scalar & Action-cond. (prefix) & $\Delta$ Regret & 95\% CI \\
\midrule
ALFWorld & $0.506$ & $0.110$ & $\mathbf{0.396}$ & $[0.310,\,0.488]$ \\
ScienceWorld & $0.245$ & $0.169$ & $\mathbf{0.076}$ & $[0.030,\,0.130]$ \\
GSM8K & $0.423$ & $0.394$ & $\mathbf{0.029}$ & $[0.006,\,0.053]$ \\
HotpotQA & $0.436$ & $0.417$ & $0.019$ & $[-0.020,\,0.061]$ \\
\bottomrule
\end{tabular}
\end{table}

\Cref{tab:fork} shows a clear but regime-dependent pattern. ALFWorld is the strongest positive regime: replacing scalar routing with the prefix-only action-conditioned controller reduces regret from $0.506$ to $0.110$. ScienceWorld also shows a reliable but smaller gain, reducing regret from $0.245$ to $0.169$. The reasoning benchmarks sit near the low-gain boundary: GSM8K improves slightly, while HotpotQA is compatible with zero improvement.

This pattern is not explained by giving the controller a more flexible function class. On ALFWorld, replacing thresholding with the same RF+LCB family on the one-dimensional $p_{\text{fail}}(h)$ input reduces regret only from $0.506$ to $0.449$, still far above the full-prefix witness at $0.110$ (Appendix Table~\ref{tab:pfail_rf}). The effect also persists across model scales on ALFWorld: the gain narrows as the base model improves, but remains positive from 7--8B models through GPT-5.4 (\Cref{sec:scale}). The low-gain reasoning benchmarks therefore raise the central regime question: are their interventions too weak to expose a latent gap, or is scalar routing already close to adequate?

\subsection{Recalibrating the same scalar does not improve control}
\label{sec:calibration}

A natural objection is that scalar routing might fail only because the scalar is miscalibrated. We test this directly by applying Platt scaling and isotonic regression to two candidate scalars, confidence and failure score, and measuring both prediction quality and control regret under the same threshold-routing policy.

\begin{table}[t]
\centering
\caption{
Calibration decomposition on ALFWorld. Recalibration improves prediction metrics but does not improve control regret under threshold routing. These regret values are computed on the calibration evaluation pool and are not directly comparable to \Cref{tab:fork}.
}
\label{tab:calibration}
\small
\setlength{\tabcolsep}{4pt}
\begin{tabular}{llccc}
\toprule
Scalar & Method & ECE & Brier & Control regret \\
\midrule
\multirow{3}{*}{Confidence}
& Raw & $0.463$ & $0.397$ & $0.318$ \\
& Platt & $0.006$ & $0.110$ & $0.318$ \\
& Isotonic & $0.022$ & $0.110$ & $0.318$ \\
\midrule
\multirow{3}{*}{Failure score}
& Raw & $0.060$ & $0.096$ & $0.358$ \\
& Platt & $0.047$ & $0.094$ & $0.358$ \\
& Isotonic & $0.098$ & $0.103$ & $0.462$ \\
\bottomrule
\end{tabular}
\end{table}

\Cref{tab:calibration} separates calibration error from target error. If the scalar were the right control target but merely miscalibrated, recalibration should improve both prediction metrics and control regret. Instead, it improves only the former. For confidence, Platt scaling nearly eliminates calibration error, reducing ECE from $0.463$ to $0.006$, yet regret remains fixed at $0.318$. Failure score shows the same qualitative pattern: Platt scaling slightly improves ECE and Brier score, but regret remains $0.358$. Isotonic regression can even worsen regret by creating ties among previously distinct scores. Thus, calibration improves the score as a predictor, but not as a controller: the missing ingredient is intervention-relevant information that the scalar does not represent.

\subsection{When does the gap matter? Intervention value and scalar mismatch}
\label{sec:regime}

The preceding results show that action-conditioned control can help, but not uniformly. This is exactly what the theory predicts. A richer controller has room to improve only when two conditions hold: the intervention can still change the downstream outcome, and the scalar discards information relevant to intervention advantage. The regimes below show how these conditions separate strong positive cases from low-gain boundary cases.

\paragraph{A strong positive regime.}
\begin{wrapfigure}[17]{r}{0.55\textwidth}
    \vspace{-0.5em}
    \centering
    \includegraphics[width=0.53\textwidth]{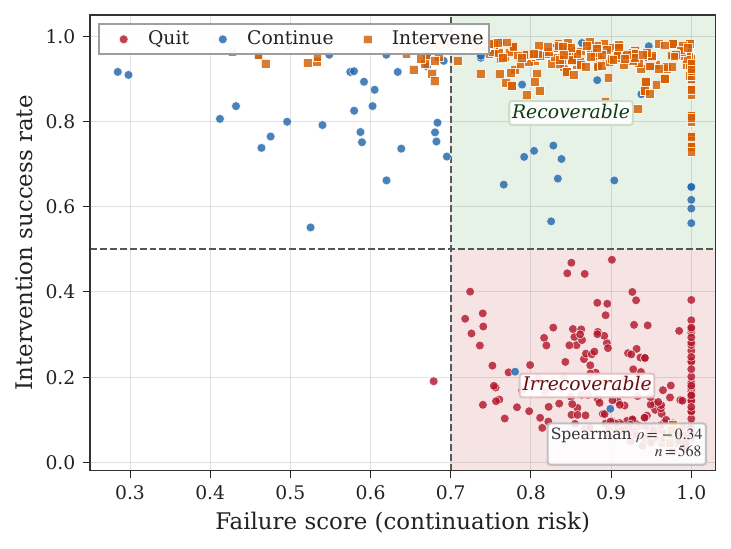}
    \caption{
    Same risk, different actions in ALFWorld. Similar failure scores can require opposite actions.
    }
    \label{fig:scatter}
    \vspace{-0.6em}
\end{wrapfigure}

ALFWorld provides the clearest case where both conditions hold. Its defer-to-expert intervention creates many recoverable-but-failing states: the agent is headed toward failure, but expert intervention would rescue the episode. Failure scores identify these states as high-risk, but they do not distinguish recoverable from irrecoverable high-risk states (\Cref{fig:scatter}). This produces the large gain in \Cref{tab:fork}. The result is also robust to utility choices: across a $5 \times 5$ sweep of intervention cost and wrong-answer penalty, all $25/25$ cells remain positive (\Cref{tab:cost}).

\paragraph{Positive gains without a privileged expert.}
The ALFWorld result does not rely only on an oracle expert. Replacing defer-to-expert with a stronger-model handoff still produces a positive regime: the 7B agent generates the base prefix, and GPT-5.4 takes over the repair branch from that state. This intervention is weaker than the oracle expert but substantially stronger than prompt-only self-repair. Its branch success is $0.30$, and the prefix-only witness still reduces regret by $0.316$.

\paragraph{Strong intervention is not sufficient.}
ScienceWorld shows that intervention value alone does not determine the size of the gap. Its gold-prefix intervention is strong enough to produce a reliable gain, but the improvement is smaller than on ALFWorld. The natural interpretation is that ScienceWorld has less scalar mismatch: failure risk and intervention advantage are not identical, but they are more aligned than in the strongest positive regime. A supplementary cost sweep is consistent with this picture, with $23/25$ ScienceWorld cells remaining positive under utility perturbations (\Cref{tab:cost}).

\paragraph{Small deployed gains have different causes.}
The reasoning benchmarks both show small gains under their deployed same-model verify intervention, but the oracle intervention probe reveals different reasons. We replace the verify branch with a gold-answer upper bound and recompute controllers on the counterfactual data (\Cref{app:oracle_probe}). On the oracle-probe subset, GSM8K's gain rises from $0.028$ to $0.127$, revealing a latent gap masked by weak intervention. HotpotQA barely changes on the same probe ($0.043 \to 0.052$), indicating that scalar routing is already close to adequate in that regime.

\paragraph{When intervention itself has little value.}
At the opposite end of the ALFWorld spectrum, non-privileged prompt-only same-model repair does not create a useful control regime. Even the strongest variant achieves verify-branch success of only $0.125$ on a pilot of $8$ prefixes, and all learned controllers collapse to \texttt{quit} (full breakdown in \Cref{tab:realistic}). This is a degenerate case where the routing rule is not the bottleneck: the intervention itself has too little value at the states where it would be applied. This is consistent with prior observations that prompt-only self-correction often fails because the correction process lacks enough value to rescue the trajectory~\citep{kamoi2024selfcorrect}.

Taken together, these regimes give a simple boundary condition. Intervention value is necessary: if intervention cannot improve the outcome, richer routing has little room to help. Scalar mismatch is the second condition: once intervention remains valuable, richer control pays off only when the deployed scalar also hides recoverability. This explains why ALFWorld is a strong positive case, ScienceWorld is weaker but still positive, GSM8K reveals a masked latent gap once intervention quality improves, and HotpotQA remains near the low-gain boundary. A compact summary of these regimes appears in Appendix Table~\ref{tab:regime_summary}.

\subsection{Exploitability can anticipate useful regimes}
\label{sec:ftgap}

The regime boundary can also be diagnosed from branched validation data. We define \emph{exploitability beyond scalar} as the held-out improvement in oracle-action prediction obtained by adding prefix features to a scalar-only predictor. Across $84$ regimes, this diagnostic correlates with deployable gain ($r=0.716$), and the relationship remains visible within ALFWorld ($r=0.604$). Cross-model transfer on matched ALFWorld splits gives the same qualitative picture: exploitability computed on Qwen-7B data predicts Llama-8B gains ($r=0.752$), and Llama-8B exploitability predicts Qwen-7B gains ($r=0.563$). Thus, exploitability is not a pre-data oracle, but a development-time indicator of whether prefix information beyond the scalar is likely to matter for control. Full definitions, dependence-robust checks, and scalar-candidate ablations appear in \Cref{app:dependence}.

\paragraph{Intervention-aligned summaries.}
Additional scalar-summary experiments confirm that the deficit is target choice, not scalar routing per se. Under the pooled protocol of \Cref{app:multiscalar}, the best deployable scalar on ALFWorld reduces regret from $0.451$ for the failure score to $0.152$, and a compact intervention-aligned multi-scalar summary further reduces regret to $0.057$; the prefix-only witness reaches $0.015$. Thus, our claim is not that no scalar can ever be useful, but that continuation-risk scalars are insufficient unless they preserve intervention advantage.

\section{Related work}
\label{sec:related}

\paragraph{Runtime oversight and calibration.}
Runtime-oversight work for language-model agents often uses scalar signals such as confidence, uncertainty, or failure risk to decide when to intervene~\citep{xuan2026confidence_dichotomy,zhang2026agentic_calibration,vasudev2026failure_prevention,ding2026calibrate_then_act,ren2023knowno}. Closest to our setting, \citet{vasudev2026failure_prevention} identify a prediction--prevention gap: accurate failure prediction need not imply effective failure prevention. We sharpen this distinction as \emph{target error}. A scalar failure score can be accurate for predicting continuation risk while still insufficient for choosing interventions, because the control-relevant object is intervention advantage. Thus, recalibrating or improving the same score can improve prediction quality without recovering the missing intervention-relevant information. This distinguishes our work from trajectory-level calibration methods~\citep{zhang2026agentic_calibration}, which improve confidence estimates but do not by themselves change the decision target.

\paragraph{Selective prediction, deferral, and reasoning-time control.}
Selective prediction and learning-to-defer study when a model should abstain or hand off to an expert, typically using confidence or risk as the routing signal~\citep{geifman2017selective,geifman2019selectivenet,madras2018predict,mozannar2020consistent,verma2022calibrated,mao2023two_stage,verma2023learning,joshi2021sltd,diao2024rtuning,wen2025know_limits}. Reasoning-time methods select among sampled, searched, or verified branches~\citep{wang2022selfconsistency,yao2023tree_of_thoughts,snell2024scaling,jinnai2024regularized}, and process-level verifiers use intermediate signals richer than a final confidence score~\citep{lightman2024lets,uesato2022solving,wang2024mathshepherd,choudhury2025agent_prm,gandhi2025swe_prm}. Our focus is complementary: we ask when compressing a trajectory prefix into a scalar changes the downstream control decision, formalize the corresponding sufficiency condition, and measure the mismatch with same-prefix counterfactuals.

\section{Conclusion}
\label{sec:discussion}

Runtime oversight for LLM agents is fundamentally a decision problem, not merely a prediction problem. This paper shows that the central limitation of scalar-risk oversight is not calibration quality alone, but the decision target being summarized. A failure score can be well calibrated for continuation risk while still discarding information needed to decide whether intervention would improve the outcome. The control-relevant object is therefore \emph{intervention advantage}: richer control helps when interventions retain downstream value and scalar scores hide recoverability. More broadly, LLM-agent oversight should move from calibrated risk scoring toward action-conditioned value estimation.

\textbf{Limitations and future work.}
We isolate whether intervention has value, rather than the broader problem of selecting among many structured interventions. Our witness controller is intentionally simple and prefix-only, so its gains should be read as evidence for the decision target rather than for an optimal controller architecture. Prefix branching is also an offline, single-rollout evaluation protocol; repeated branching suggests this is not the dominant error source (\Cref{app:repeated_branching}). Future work should extend this framework to stronger interventions, richer controllers, and online action-conditioned policies.

\bibliographystyle{plainnat}
\bibliography{references}

\clearpage
\appendix
\crefalias{section}{appendix}
\crefalias{subsection}{appendix}

\begin{center}
{\LARGE\bfseries Appendix}
\end{center}

\section{Theoretical details and proofs}
\label{app:proofs}

Throughout this appendix, we assume a finite action set and integrable downstream utilities, so all conditional expectations below are well defined. Equalities and inclusions involving prefix states are understood up to $\cD$-null sets.

\subsection{Proof of \Cref{prop:sufficiency}}

\begin{proof}
Fix a scalar signal $g$.

First suppose that $g$ is sufficient in the sense of \Cref{def:g_sufficient}. Then there exists a measurable selector $d$ such that
$d(g(h)) \in A^*(h)$ almost surely. The induced $g$-routed controller
$\pi_g(h)=d(g(h))$ therefore selects an optimal action almost surely and is lossless.

Conversely, suppose that no such measurable selector exists. Then every controller of the form $\pi_g(h)=d(g(h))$ fails to lie in $A^*(h)$ on a set of positive probability, so no policy measurable with respect to $g$ can be lossless.

It remains to specialize this condition to the binary action set
$\cA=\{\texttt{continue},\texttt{intervene}\}$. Let
\[
E=\{h:d(g(h))=\texttt{intervene}\}
\]
be the intervention region induced by a $g$-routed selector. Since
\[
\tauadv(h)=\EU_{\texttt{intervene}}(h)-\EU_{\texttt{continue}}(h),
\]
intervention is uniquely optimal when $\tauadv(h)>0$, continuation is uniquely optimal when $\tauadv(h)<0$, and both actions are optimal when $\tauadv(h)=0$. Therefore losslessness requires
\[
\{h:\tauadv(h)>0\}\subseteq E\subseteq \{h:\tauadv(h)\ge 0\}.
\]
Conversely, any $g$-measurable set $E$ satisfying this inclusion defines a lossless binary selector: intervene on $E$ and continue outside $E$. Thus, under the binary abstraction, lossless scalar routing is possible exactly when the sign of intervention advantage is recoverable from $g$, up to tie states where $\tauadv(h)=0$.

If one further restricts the controller to threshold policies on $g$, then the intervention region must additionally be representable as a threshold set in $g$. This is the monotonicity restriction mentioned in the main text.
\end{proof}

\subsection{Derivation of the abstraction-loss identity}
\label{app:gap_derivation}

The main text defines the value of the fully informed optimal controller as
\[
V^*=\EE\!\left[\max_{a\in\cA}\EU_a(h)\right],
\]
and the value of the best controller measurable with respect to $g$ as
\[
V_g=\EE\!\left[\max_{a\in\cA}\EE[\EU_a(h)\mid g(h)]\right].
\]
This subsection derives the corresponding abstraction-loss expression.

\begin{proof}
By the tower property,
\[
V^*
=
\EE\!\left[\max_{a\in\cA}\EU_a(h)\right]
=
\EE\!\left[
\EE\!\left[
\max_{a\in\cA}\EU_a(h)
\,\middle|\,
g(h)
\right]
\right].
\]
A controller that only observes $g(h)$ must choose a single action for each scalar value. Therefore, at each value of $g$, its optimal choice is the action with largest conditional expected utility, giving
\[
V_g
=
\EE\!\left[
    \max_{a\in\cA}
    \EE\!\left[\EU_a(h)\mid g(h)\right]
\right].
\]
The scalar abstraction loss is therefore
\[
\Gap(g)
=
\EE\!\left[
\EE\!\left[
\max_{a\in\cA}\EU_a(h)
\,\middle|\,
g(h)
\right]
-
\max_{a\in\cA}
\EE\!\left[
\EU_a(h)
\,\middle|\,
g(h)
\right]
\right].
\]
This is the identity used in the main text.

The expression is always nonnegative because choosing the best action after observing the full prefix cannot be worse than first averaging all prefixes with the same scalar value and then choosing a single action. Moreover, $\Gap(g)=0$ exactly when there exists a $g$-measurable selector that chooses an element of $A^*(h)$ almost surely. In other words, zero abstraction loss is equivalent to $g$-sufficiency.
\end{proof}

\subsection{A schematic decomposition of intervention advantage}
\label{app:tau_decomp}

The main text uses $\tauadv(h)$ only as the expected utility gain from intervening rather than continuing. For intuition, it is useful to decompose this gain into three schematic forces:
\[
\tauadv(h) = r(h)-d(h)-c(h),
\]
where $r(h)$ denotes recovery gain, $d(h)$ disruption loss, and $c(h)$ intervention cost.

This decomposition is not a benchmark-specific identity. Rather, it clarifies why continuation failure risk and intervention advantage can diverge. Failure probability tracks how risky continuation is. Intervention advantage additionally depends on what the intervention can recover, how much it disrupts the current trajectory, and what it costs. Thus, two prefixes can have the same continuation-failure probability while having different signs of $\tauadv(h)$.

\subsection{Conflict-set lower bound}
\label{app:conflict}

The abstraction-loss identity above is exact but sometimes abstract. The following lower bound gives a simple sufficient condition for positive loss: if a scalar cell contains two non-negligible sets of states with different uniquely optimal actions, then any scalar-routed controller must make mistakes on at least one of them.

\begin{proposition}[Conflict-set lower bound]
\label{prop:conflict_gap}
Let $g$ be fixed. Suppose there exists a measurable subset $C$ of the range of $g$, actions $a\neq b$, and disjoint state sets
\[
S_a,S_b\subseteq \{h:g(h)\in C\}
\]
such that, for some $\gamma>0$,
\[
\EU_a(h) \ge \EU_c(h)+\gamma\ \text{for all } c\neq a,\; h\in S_a,\qquad
\EU_b(h) \ge \EU_c(h)+\gamma\ \text{for all } c\neq b,\; h\in S_b.
\]
Let $q_a=\PP(S_a\mid g(h)\in C)$ and $q_b=\PP(S_b\mid g(h)\in C)$. Then any policy routed only through $g$ incurs regret at least
\[
\PP(g(h)\in C)\,\gamma\,\min\{q_a,q_b\}.
\]
\end{proposition}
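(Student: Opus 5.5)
Here regret of a policy $\pi$ means $\EE[\max_{a'}\EU_{a'}(h)-\EU_{\pi(h)}(h)]$, measured against the fully informed controller defining $V^*$. I will work with this pointwise regret. It is nonnegative, and since the statement concerns only $g$-routed policies, this is the right comparator.

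\emph{Step 1: restrict to the conflict cell.} Fix a policy $\pi(h)=d(g(h))$ with $d$ measurable, allowing $d$ to be randomized if needed. Pointwise regret is nonnegative, so discarding everything outside $\{g(h)\in C\}$ and everything outside $S_a\cup S_b$ only lowers the total. This gives
\[
\mathrm{Regret}(\pi)\ \ge\ \PP(g(h)\in C)\,\EE\bigl[\mathbf{1}_{S_a}(h)\,r_\pi(h)+\mathbf{1}_{S_b}(h)\,r_\pi(h)\,\big|\,g(h)\in C\bigr],
\]
where $r_\pi(h)=\max_{a'}\EU_{a'}(h)-\EU_{\pi(h)}(h)$.

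\emph{Step 2: a pointwise margin bound.} On $S_a$, action $a$ is optimal with margin $\gamma$. Hence $r_\pi(h)\ge\gamma\,\mathbf{1}\{\pi(h)\ne a\}$, and likewise $r_\pi(h)\ge\gamma\,\mathbf{1}\{\pi(h)\ne b\}$ on $S_b$. Every action other than $a$ loses at least $\gamma$ on $S_a$, so the bound holds whatever $\pi$ chooses there.

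\emph{Step 3: the key obstacle.} The difficulty is that $g$ may take many values inside $C$. A $g$-routed policy can therefore choose $a$ on some values of $g$ and $b$ on others, and it need not err on a full set. As stated, with only the conditional masses $q_a,q_b$ on the whole cell, the bound needs an extra hypothesis. One option is that $C$ is a single level set of $g$, or more generally that $\pi$ is constant on $C$. The other is that, for almost every $x\in C$, the conditional masses $\PP(S_a\mid g=x)$ and $\PP(S_b\mid g=x)$ are both at least $q_a$ and $q_b$ respectively, i.e.\ the two sets are mixed within each level set. I would state the proof under the level-set reading, which matches the phrase ``scalar cell'' and the interpretation after \Cref{prop:sufficiency}. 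I would also add a remark that the general case follows by applying the argument level set by level set and integrating.

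\emph{Step 4: finish the constant-action case.} Under that reading, $\pi$ chooses a single action $u$ (or a distribution over actions) on $C$. If $u\ne a$, every state in $S_a$ incurs loss at least $\gamma$, contributing $\gamma q_a$. Otherwise $u=a\ne b$, and every state in $S_b$ incurs loss at least $\gamma$, contributing $\gamma q_b$. The conditional regret is therefore at least $\gamma\min\{q_a,q_b\}$. Multiplying by $\PP(g(h)\in C)$ gives the claim. For a randomized choice, the loss is a convex combination of the two deterministic cases and so is still at least $\gamma\min\{q_a,q_b\}$.

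For the level-set-wise version, set $q_a(x),q_b(x)$ as the conditional masses at $g=x$. The same argument gives conditional regret at least $\gamma\min\{q_a(x),q_b(x)\}$, and integrating over $x\in C$ recovers the bound whenever $\min\{q_a(x),q_b(x)\}\ge\min\{q_a,q_b\}$ almost surely on $C$.
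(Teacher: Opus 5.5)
Your proof is essentially the paper's: fix a $g$-routed policy, note that it plays a single action on the cell, split on whether that action is $a$ (so $S_b$ pays at least $\gamma$) or not (so $S_a$ pays at least $\gamma$), and multiply by $\PP(g(h)\in C)$. Your Step 3 caveat is correct, and it is exactly the step the paper's proof takes for granted when it asserts that ``on the scalar cell $C$, the policy must choose a single action.'' If $C$ contains several values of $g$, the claim fails: take two equiprobable states with $g=\mathbf{1}_{S_a}$, $C=\{0,1\}$, and $d(1)=a$, $d(0)=b$, which gives zero regret. The statement should therefore be read with $C$ a single level set, or with $\pi$ constant on $\{g(h)\in C\}$, as you propose.
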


\begin{proof}
Fix any policy $\pi_g$ measurable with respect to $g$. On the scalar cell $C$, the policy must choose a single action.

If it chooses $a$, then it is suboptimal by at least $\gamma$ on every state in $S_b$. If it chooses $b$, then it is suboptimal by at least $\gamma$ on every state in $S_a$. If it chooses any third action $c\notin\{a,b\}$, then by assumption both $a$ and $b$ dominate $c$ by at least $\gamma$ on $S_a$ and $S_b$, respectively, so the policy incurs regret at least $\gamma$ on all of $S_a\cup S_b$.

Therefore, conditional on $g(h)\in C$, any $g$-routed policy incurs regret at least
\[
\gamma\,\min\{q_a,q_b\}.
\]
Multiplying by $\PP(g(h)\in C)$ yields the stated lower bound.
\end{proof}

\section{Prefix branching protocol and experimental setup}
\label{app:protocol}

\subsection{Prefix branching protocol}
\label{app:prefix_branching}

\begin{figure}[H]
    \centering
    \includegraphics[width=\textwidth]{figures/prefix_branching_protocol.pdf}
    \caption{
    Prefix branching protocol. A chosen prefix is replayed and checked against the original run; mismatches are discarded. The validated prefix state $h$ is then branched over all candidate actions to obtain realized downstream utilities, enabling oracle-action computation, controller evaluation, and mismatch diagnostics from the same prefix state. Here \texttt{intervene} denotes the benchmark-specific non-continue intervention.
    }
    \label{fig:prefix_branching}
\end{figure}

Prefix branching evaluates candidate actions from the same reconstructed prefix state. For interactive environments, we replay the base trajectory to the selected prefix, verify step-by-step agreement with the original run, and discard any mismatch. The retained interactive data have a replay match rate of $100\%$. Each validated prefix is then branched over the available actions, yielding realized downstream utilities for \texttt{continue}, the deployed intervention, and \texttt{quit}. These branched outcomes define the same-prefix oracle action and allow controller regret to be measured without comparing policies on different trajectories.

\subsection{Reward structure}
\label{app:rewards}

Each benchmark uses a three-action control set. The utility of action $a$ at prefix state $h$ is
\[
U_a(h) =
\begin{cases}
u(h,a) - c_a - s\,m(h,a), & a \neq \texttt{quit}, \\
0, & a = \texttt{quit},
\end{cases}
\]
where $u(h,a)$ is the outcome score, $m(h,a)$ is the branch length, $c_a$ is a fixed intervention cost, and $s$ is a per-step cost. For binary-outcome tasks, $u(h,a)$ is $1$ for a correct outcome and $-w$ for an incorrect one.

\begin{table}[t]
\centering
\caption{Utility parameters per benchmark.}
\label{tab:rewards}
\small
\begin{tabular}{lcccl}
\toprule
Benchmark & Wrong penalty $w$ & Intervention cost $c_a$ & Step cost $s$ & Intervention \\
\midrule
ALFWorld & $1.0$ & $0.05$ & $0.01$ & Defer to expert \\
ScienceWorld & --- & $0.05$ & $0.01$ & Defer to gold prefix \\
GSM8K & $0.5$ & $0.10$ & --- & Verify / re-answer \\
HotpotQA & $0.5$ & $0.10$ & --- & Verify / re-answer \\
\bottomrule
\end{tabular}
\end{table}

For GSM8K and HotpotQA, there are no step costs because these are single-turn reasoning tasks. For ScienceWorld, the environment returns a continuous score rather than a binary outcome, so there is no separate wrong-answer penalty; utility is the environment score minus costs. Quit always yields utility zero. These asymmetries are intentional: intervention has a positive cost, so a controller cannot improve trivially by always intervening.

\subsection{Data splits and pooling}
\label{app:splits}

This subsection records the units underlying the main pooled comparisons, the model-scale analysis, and the exploitability diagnostic. The bookkeeping matters because significance tests operate at the run level, while multiple cost or scalar analyses may reuse the same branched trajectories.

The original interactive suite used an $8/4/8$ train/validation/test split measured in prefix states per model--seed combination, with all prefixes from the same base trajectory assigned to the same split to avoid leakage. The enlarged within-model suite adds $20/10/20$ splits for later seeds. For the supporting benchmarks, GSM8K and HotpotQA use $48/24/48$ train/validation/test splits; the same grouping rule is used when multiple prefixes originate from the same base trajectory. The primary comparator-aligned within-model suite pools across seeds and both agent models as follows:
\begin{itemize}
    \item ALFWorld: $26$ runs at the 7--8B tier, pooling the original $8/4/8$ suite and enlarged $20/10/20$ suite.
    \item ScienceWorld: $16$ runs in the within-model suite, with $n_{\mathrm{test}}=364$ pooled test prefixes.
    \item GSM8K: $6$ runs ($3$ seeds $\times$ $2$ models), with $n_{\mathrm{test}}=288$ pooled test prefixes.
    \item HotpotQA: $6$ runs ($3$ seeds $\times$ $2$ models), with $n_{\mathrm{test}}=288$ pooled test prefixes.
\end{itemize}

The model-scale analysis (\Cref{sec:scale}) adds Qwen2.5-72B and GPT-5.4 on ALFWorld, each with $4$ seeds using the same $20/10/20$ split per seed. The augmented exploitability analysis additionally includes $30$ ALFWorld intervention-quality ladder runs, yielding $56$ ALFWorld runs and $84$ total regimes. Multiple scalar and cost sweeps reuse these same branched trajectories; they create additional \emph{regime points}, not additional environment rollouts.

\subsection{Models and API configuration}
\label{app:implementation}

We use four instruction-tuned models spanning three capability tiers:
\begin{itemize}
    \item \textbf{Qwen2.5-7B-Instruct}~\citep{qwen2025qwen25}: a 7B-parameter model from the Qwen family.
    \item \textbf{Llama-3.1-8B-Instruct}~\citep{dubey2024llama3}: an 8B-parameter model from the Llama family.
    \item \textbf{Qwen2.5-72B-Instruct}~\citep{qwen2025qwen25}: a 72B-parameter model from the Qwen family.
    \item \textbf{GPT-5.4}~\citep{openai2026gpt54}: a frontier-tier model from the GPT-5 family, accessed through the public OpenAI API using the model identifier \texttt{gpt-5.4}.
\end{itemize}
The 7--8B models are used across all benchmarks; the 72B and frontier models are evaluated on ALFWorld for the model-scale analysis (\Cref{sec:scale}). All models are accessed via standard inference without any benchmark-specific fine-tuning. The controller is trained on trajectory-level features extracted from the agent's interaction history.

\paragraph{GPT-5.4 API configuration.}
GPT-5.4 experiments were run through the public OpenAI API around April 20, 2026 using the model identifier \texttt{gpt-5.4}. We used temperature $0.0$ for all calls. We did not pass a \texttt{reasoning.effort} field. The API payload did not include tools, tool choice, browsing, or web search. Per-step generation budgets were set by the experiment scripts: for ALFWorld, $24$ tokens for action generation, $8$ for confidence generation, and $24$ for verification; for cross-model repair, $24$ for action generation and $8$ for confidence generation. The wrapper first passed \texttt{max\_tokens}; if required by the API, it retried with \texttt{max\_completion\_tokens}.

\paragraph{Compute resources.}
Open-weight 7--8B inference was run with local vLLM workers on a server with 4 NVIDIA RTX A5000 GPUs (24 GB each), an AMD EPYC 7443P CPU (24 cores / 48 hardware threads), and 188 GiB RAM. Each local vLLM worker used a single GPU; controller training used CPU-based scikit-learn random forests and was lightweight relative to LLM inference and environment rollouts. GPT-5.4 experiments were run through the public OpenAI API, so the remote hardware configuration is not available to us. The main suite contains 54 runs and 3,784 retained branched-prefix rows; including the ALFWorld intervention-quality ladder gives 84 runs and 4,984 retained rows. Exact aggregate token counts and total GPU-hours were not logged.

\paragraph{Assets, licenses, and terms.}
We use existing benchmarks and models with attribution: ALFWorld, ScienceWorld, GSM8K, HotpotQA, Qwen2.5, and Llama-3.1. We use these assets through their documented access mechanisms and terms of use, and we do not redistribute benchmark assets or model weights in this submission.

\subsection{Feature representation}
\label{app:features}

At each prefix state, the controller observes a fixed-length feature vector extracted from the agent's trajectory so far. Our primary comparator-aligned analyses use only \emph{prefix-available} features for both scalar routing and the action-conditioned witness. This keeps the information available to the compared controllers symmetric. Auxiliary preview-aware variants append intervention-output features after explicitly treating preview as additional information rather than as part of the base comparison.

\paragraph{Feature categories.}
All benchmarks share several common feature families: length statistics of the task description, observation, and model response; progress indicators such as step index and history length; and the agent's self-reported confidence score, normalized to $[0,1]$. Benchmark-specific features are summarized below.

\begin{itemize}
    \item \textbf{GSM8K.}
    \emph{Continue branch:} question and response lengths, count of numbers in the question, answer-format indicators (has final answer tag, decimal, negative), numeric magnitude features (answer value, max/min/mean of question numbers), range-check flags, hedging-word indicator, $12$ math-keyword indicators, and confidence.
    \emph{Intervention branch (additional):} verify-expression length, tool-execution success flag, verify answer value, continue--verify agreement, answer gap and gap ratio, range-check flags for both answers, answer-in-question flags, and $6$ arithmetic-operator indicators.

    \item \textbf{HotpotQA.}
    \emph{Continue branch:} question and response lengths, answer token count, answer-present flag, answer-in-question overlap, yes/no question and answer indicators, topic-keyword indicators, $7$ wh-word indicators, and confidence.
    \emph{Intervention branch (additional):} search-query length, evidence length, retrieval-success flag, top and mean retrieval scores, number of unique retrieved titles, query--question overlap, continue--verify agreement, answer-in-evidence flags, yes/no flags, and answer lengths.

    \item \textbf{ALFWorld.}
    \emph{Continue branch:} step index, history length, task/observation lengths, number of admissible commands, action word count, $10$ action-type indicators, ``nothing happens'' flag, $7$-dimensional task-type one-hot, and confidence.
    \emph{Intervention branch (additional):} step index, admissible-command count, intervention-action word count, continue--intervention agreement, $8$ intervention action-type indicators, observation state flags, and task-type one-hot.

    \item \textbf{ScienceWorld.}
    \emph{Continue branch:} step index, task-description length, observation length, number of valid actions, action word count, history length, $7$ action-type indicators, \texttt{look~around} flag, observation flags, and confidence.
    \emph{Intervention branch (additional):} step index, valid-action count, continue--intervention agreement, action-length difference, action-type flags, and observation flags.
\end{itemize}

The prefix-only feature dimensionalities are GSM8K ($32$), HotpotQA ($19$), ALFWorld ($26$), and ScienceWorld ($17$). For the auxiliary preview-aware variants, the intervention branch appends additional features, yielding GSM8K ($56$ verify), HotpotQA ($35$), ALFWorld ($40$), and ScienceWorld ($25$).

\paragraph{Classifier.}
All controllers use a \texttt{RandomForestClassifier} from scikit-learn with $200$ estimators, maximum depth $6$, and minimum samples per leaf $2$. For the conservative witness variant, per-tree prediction variance provides the uncertainty estimate used in the lower confidence bound.

\paragraph{Feature importance.}
To check that the controller is not relying on opaque feature-engineering artifacts, we refit pooled interactive-benchmark models and inspect grouped random-forest feature importance. \Cref{fig:feature_importance} should be read as an auxiliary preview-analysis figure: failure and continue models rely mostly on trajectory-progress and state-shape signals, while preview-aware intervention models place additional weight on intervention-branch features.

\begin{figure}[t]
    \centering
    \includegraphics[width=0.9\textwidth]{figures/interactive_feature_group_importance.pdf}
    \caption{
    Grouped feature importance for pooled interactive-benchmark models in the auxiliary preview-aware analysis. Failure and continue models depend mostly on trajectory-progress and state-shape signals, whereas preview-aware intervention models additionally rely on intervention-branch features.
    }
    \label{fig:feature_importance}
\end{figure}

\subsection{Statistical testing}
\label{app:stats}

All significance tests use paired bootstrap with $10{,}000$ resamples at the \emph{run} level. Each resample draws run-level differences with replacement and computes the mean difference. A comparison is marked significant if the $95\%$ confidence interval excludes zero. This choice treats a run, rather than an individual prefix, as the statistical unit and therefore respects dependence among prefixes collected within the same rollout configuration.

Because each branched action from a prefix is executed once, the realized branch utility is a single-rollout estimate of action value. \Cref{app:repeated_branching} provides a stability check on an exact-replayable subset.

\subsection{Repeated branching on an exact-replayable shallow subset}
\label{app:repeated_branching}

The saved ALFWorld rows store a truncated history string rather than the full trajectory. As a result, exact replay from cached rows is only possible for shallow prefixes whose full history fits in that serialization window. We therefore constructed a repeated-branching probe on the subset with step index at most $4$ and reran each of \texttt{continue}, the intervention branch, and \texttt{quit} twice per prefix. Using $24$ such prefixes, repeated reruns preserved the original oracle action and the sign of intervention advantage on all prefixes (oracle-action match $=1.0$, $\tau$-sign match $=1.0$), while both repeated branch utilities had zero within-action standard deviation. The resulting repeated-subset regrets were $0.485$ for the original failure-based action and $0.009$ for the original value-based action.

We view this as a limited but useful stability check: within the exact-replayable shallow subset, the main conclusions are not driven by repeated-sampling noise, while deeper prefixes still rely on the single-rollout evaluation described above.

\section{Simulation validation}
\label{app:simulation}

To validate the theoretical quantities in a controlled setting where the data-generating process is known, we construct a synthetic environment in which continuation failure risk and recoverability can be varied separately. This lets us compute the failure-score special case of the scalar abstraction loss exactly. For brevity, we denote this appendix-only quantity by $\NMG$.

\subsection{Simulation design}

We generate $n=50{,}000$ prefix states, each characterized by a latent difficulty and recoverability profile. The action utilities are chosen so that the sufficiency condition of \Cref{prop:sufficiency} fails by construction: two states can share the same $\pfail(h)$ yet require different optimal actions, because one is \emph{hard but recoverable} while the other is \emph{hard and unrecoverable}.

\paragraph{Core experiment: binary action set.}
The action set is $\{\texttt{continue},\texttt{intervene}\}$. We compute the true NMG by binning states on true $\pfail$ using $200$ bins and comparing the oracle utility, which chooses the best action per state, to the best failure-trigger utility, which chooses the best action per bin:
\[
\NMG_{\mathrm{binned}}
=
\frac{1}{n}\sum_{i=1}^{n}\max_a \EU_a(h_i)
-
\frac{1}{n}\sum_{i=1}^{n}\EU_{a^*_{\mathrm{bin}(i)}}(h_i)
=
0.0383.
\]
The action disagreement rate between the oracle and the best failure-trigger policy is $18.1\%$, indicating that nearly one in five states receives the wrong action under scalar routing.

We also compute an isotonic-regression monotone-routing baseline, which fits a monotone mapping from $\pfail$ to actions. Because threshold policies are a restricted monotone family, this quantity should be read as a monotone-routing comparison point rather than as the unconstrained scalar abstraction loss:
\[
\NMG_{\mathrm{isotonic}}=0.0395
\ge
\NMG_{\mathrm{binned}}=0.0383.
\]

\paragraph{Three-action experiment.}
We extend the action set to $\{\texttt{continue},\texttt{verify},\texttt{quit}\}$ with a catastrophic penalty for incorrect continuation ($c=-1.25$). The NMG increases to $0.0670$, because the additional action creates more opportunity for failure-trigger misrouting. The oracle action distribution is continue $58.1\%$, verify $20.1\%$, and quit $21.8\%$.

\subsection{Key validation}

\paragraph{Failure-score abstraction loss predicts observed regret.}
We train a failure-trigger controller with a learned failure model, using logistic regression on noisy observed features with observation noise $\sigma=0.35$, and measure its regret. The learned failure-trigger achieves regret $0.0388$, and the NMG provides a tight lower bound:
\[
\NMG=0.0383 \le 0.0388.
\]
The remaining gap, $0.0005$, is estimation error from the learned failure model rather than additional abstraction loss.

\paragraph{Noise sensitivity.}
The NMG is a population quantity that depends only on the true joint distribution, not on observation noise. To verify this, we vary observation noise $\sigma$ from $0$ to $1.0$. Across all noise levels, $\NMG=0.0383$ remains constant, as expected, while learned regret fluctuates slightly due to estimation quality, ranging from $0.037$ to $0.039$. In all cases, $\NMG \le \text{learned regret}$ holds.

\section{Additional results and ablations}
\label{app:extended}

\subsection{Regime summary}
\label{app:regime_summary}

\Cref{tab:regime_summary} summarizes the regimes discussed in the main text. The table is intended as a compact map of the boundary conditions rather than as a replacement for the comparator-aligned fork test in \Cref{tab:fork}.

\begin{center}
\captionsetup{type=table,width=0.92\textwidth}
\centering
\caption{
Regime summary. Large gains require both intervention value and scalar mismatch. Low deployed gain can therefore reflect different causes: weak intervention can mask a latent gap, while scalar adequacy leaves little room for richer control.
}
\label{tab:regime_summary}
\small
\setlength{\tabcolsep}{4pt}
\renewcommand{\arraystretch}{1.08}
\begin{tabularx}{\textwidth}{l X c X X}
\toprule
Regime & Deployed intervention & Gain & Probe / robustness & Diagnosis \\
\midrule
ALFWorld & Oracle defer-to-expert & $0.396$ & $25/25$ cost-sweep cells positive & High intervention value and high scalar mismatch \\
ALFWorld repair & GPT-5.4 cross-model repair & $0.316$ & Branch success $0.30$ & Positive regime without privileged expert \\
ScienceWorld & Gold-prefix handoff & $0.076$ & $23/25$ cost-sweep cells positive & Valuable intervention, weaker mismatch \\
GSM8K & Same-model verify/re-answer & Small & Oracle probe $0.028 \to 0.127$ & Weak intervention masks latent gap \\
HotpotQA & Same-model verify/re-answer & Near zero & Oracle probe $0.043 \to 0.052$ & Scalar close to adequate \\
Prompt-only repair & Same-model repair & None & Best pilot success $0.125$ on $8$ prefixes & Intervention value too low \\
\bottomrule
\end{tabularx}
\end{center}

\subsection{Function-class ablation on the same failure score}

A natural concern is that the main ALFWorld comparison changes both the decision target and the function class: the scalar baseline thresholds a one-dimensional failure score, whereas the witness uses an RF-based action-conditioned controller. \Cref{tab:pfail_rf} isolates the function-class contribution by keeping the input fixed to the same one-dimensional $\pfail(h)$ scalar and replacing thresholding with the same RF+LCB family used by the witness. The resulting improvement is modest ($0.506 \rightarrow 0.449$), indicating that most of the main ALFWorld gain comes from leaving the failure-score abstraction rather than from classifier flexibility alone.

\begin{table}[H]
\centering
\caption{
ALFWorld function-class ablation on the same failure score. The middle row keeps the input fixed to the same one-dimensional $\pfail(h)$ scalar and changes only the controller family.
}
\label{tab:pfail_rf}
\small
\begin{tabular}{lcc}
\toprule
Controller & Input & Regret \\
\midrule
Threshold baseline & $p_{\text{fail}}(h)$ (1D) & $0.506$ \\
RF-on-$p_{\text{fail}}$ & $p_{\text{fail}}(h)$ (1D) & $0.449$ \\
Full-prefix witness & Full prefix (26D) & $0.110$ \\
\bottomrule
\end{tabular}
\end{table}

\subsection{Interactive-benchmark cost sensitivity}

This table summarizes how sensitive the deployable gain is to utility parameters on the interactive benchmarks. These values come from supplementary sensitivity sweeps and should be read as robustness analyses, not as direct replacements for the comparator-aligned numbers in \Cref{tab:fork}. The point is not to retune the task after the fact, but to show that ALFWorld remains robustly positive across the full sweep and that ScienceWorld remains mostly positive under the same style of perturbation.

\begin{table}[H]
\centering
\caption{
Interactive-benchmark cost-sensitivity summary under the supplementary sensitivity sweeps. Gains are failure-trigger regret minus prefix-only action-conditioned regret; positive values favor richer control.
}
\label{tab:cost}
\small
\begin{tabular}{lccc}
\toprule
Benchmark & Positive cells & Gain range & Gain along cost line ($c{:}0 \rightarrow 0.20$) \\
\midrule
ALFWorld & $25/25$ & $[0.170, 0.345]$ & $0.320 \rightarrow 0.252$ \\
ScienceWorld & $23/25$ & $[-0.008, 0.141]$ & $0.106 \rightarrow -0.006$ \\
\bottomrule
\end{tabular}
\end{table}

\subsection{Prompt-only same-model repair probes}

These probes document the low-value end of the ALFWorld intervention spectrum. Unlike expert defer or stronger-model handoff, these prompt-only repairs rarely improve over continuation, so they are included as boundary evidence rather than as positive main results.

\begin{table}[t]
\centering
\caption{
ALFWorld prompt-only same-model repair probes. These low-value interventions serve as boundary evidence rather than main positive results.
}
\label{tab:realistic}
\small
\begin{tabular}{lcccc}
\toprule
Intervention & Rows & Verify wins & Oracle verify & Oracle quit \\
\midrule
\texttt{reflect-retry} (Qwen) & $8$ & $0$ & $0.000$ & $0.750$ \\
\texttt{reflect-retry} (Llama) & $8$ & $0$ & $0.000$ & $0.750$ \\
\texttt{replan-h3} (Qwen) & $8$ & $1$ & $0.125$ & $0.625$ \\
\texttt{plan+replan-h3} (Qwen) & $8$ & $0$ & $0.000$ & $0.750$ \\
\bottomrule
\end{tabular}
\end{table}

\subsection{Model scales on ALFWorld}
\label{sec:scale}

A natural alternative explanation is that scalar insufficiency is merely an artifact of weaker models producing poor failure estimates. \Cref{tab:scale} evaluates the same matched-information controller comparison on ALFWorld with Qwen2.5-72B and GPT-5.4 in addition to the 7--8B base models, using enlarged per-model suites. The action-conditioned gain narrows monotonically with model capability, from about $0.29$ at the 7--8B tier to $0.201$ at 72B and $0.081$ at the frontier tier, but remains positive throughout. The narrowing is driven by improved scalar routing; action-conditioned regret stays near zero at every scale.

\begin{table}[t]
\centering
\caption{
Model-scale analysis on ALFWorld (mean regret over 4 seeds; lower is better). Scalar insufficiency persists at all scales, although its practical magnitude shrinks with stronger base agents. Regret values are computed on the enlarged per-model suites and are not directly comparable to the pooled fork-test baseline in \Cref{tab:fork}.
}
\label{tab:scale}
\small
\begin{tabular}{lccc}
\toprule
Model & Scalar & Witness & Gain \\
\midrule
Qwen2.5-7B & $0.305$ & $0.013$ & $0.293$ \\
Llama-3.1-8B & $0.316$ & $0.026$ & $0.291$ \\
Qwen2.5-72B & $0.209$ & $0.008$ & $0.201$ \\
GPT-5.4 & $0.086$ & $0.005$ & $0.081$ \\
\bottomrule
\end{tabular}
\end{table}

\subsection{Oracle intervention probe}
\label{app:oracle_probe}

The main text uses an oracle intervention probe to disentangle weak intervention from genuine low mismatch on GSM8K and HotpotQA. This creates a strong-intervention diagnostic view of the reasoning benchmarks that plays the same conceptual role as expert defer in ALFWorld and gold-prefix handoff in ScienceWorld: intervention quality is no longer the main bottleneck, so the remaining gap isolates scalar mismatch more directly.

This is a counterfactual intervention replacement, not a new benchmark: the base trajectories and continuation outcomes are fixed, and only the verify branch is upgraded to an always-correct oracle. For each existing run, we replace the verify-branch outcome with a gold-answer oracle that always succeeds, setting \texttt{verify\_correct}${}=\text{True}$ and setting the verify outcome score to $u_{\texttt{verify}}=1.0$ for all rows. Under the reward definition in \Cref{app:rewards}, the resulting verify utility for GSM8K and HotpotQA is therefore $1.0-c_a$. Continue-branch outcomes, features, and quit utilities are unchanged. We then retrain both the failure-trigger controller and the conservative action-conditioned witness on this counterfactual data and evaluate on held-out test rows.

\begin{table}[t]
\centering
\caption{
Oracle intervention probe on the reasoning benchmarks. ``Original'' rows are computed on the subset of runs for which oracle intervention was evaluated and therefore differ slightly from \Cref{tab:fork}. ``Gain'' is failure-trigger regret minus witness regret; higher gain means more room for action-conditioned control.
}
\label{tab:oracle_probe}
\small
\begin{tabular}{llccc}
\toprule
 & & Failure regret & Witness regret & Gain \\
\midrule
\multirow{2}{*}{GSM8K} & Original & $0.414$ & $0.386$ & $0.028$ \\
 & Oracle & $0.144$ & $0.018$ & $0.127$ \\
\midrule
\multirow{2}{*}{HotpotQA} & Original & $0.454$ & $0.411$ & $0.043$ \\
 & Oracle & $0.072$ & $0.020$ & $0.052$ \\
\bottomrule
\end{tabular}
\end{table}

The oracle probe is an upper-bound diagnostic, not a deployment result: it asks how much latent control gap exists if intervention quality were not a constraint. On GSM8K, the $4.5\times$ increase in gain indicates that the deployed same-model verify intervention masks a moderate latent gap. On HotpotQA, the near-unchanged gain ($1.2\times$) confirms that the failure score already preserves most control-relevant information in this regime.

\subsection{Per-variant action-conditioned estimator results}

\Cref{tab:per_variant} asks a supplementary robustness question: does the main conclusion depend on a single action-conditioned estimator family, or does it appear across several reasonable variants? These tables are not replacements for \Cref{tab:fork}. \Cref{tab:fork} is the comparator-aligned main result: scalar routing and the witness controller see the same prefix-only information on the fork-test pool. By contrast, the table below summarizes family-ablation experiments on pooled benchmark suites, including auxiliary preview-aware variants used only for supplementary analysis. Their absolute regret values should therefore be read only as within-appendix estimator ablations.

\begin{table}[t]
\centering
\caption{
Control regret by action-conditioned estimator variant under the supplementary family-ablation protocol (pooled across seeds and models; lower is better). These values are within-appendix ablations and are not directly comparable to the comparator-aligned prefix-only results in \Cref{tab:fork}.
}
\label{tab:per_variant}
\small
\begin{tabular}{lcccc}
\toprule
Variant & GSM8K & HotpotQA & ALFWorld & ScienceWorld \\
\midrule
Failure-trigger & $0.423$ & $0.437$ & $0.491$ & $0.245$ \\
\midrule
Value (direct utility) & $0.377$ & $0.424$ & $0.050$ & $0.178$ \\
Explicit-EU ($\beta=0$) & $0.364$ & $0.404$ & $0.010$ & $0.206$ \\
IVC-LCB & $0.395$ & $0.358$ & $0.010$ & $0.182$ \\
Two-stage & $0.407$ & $0.403$ & $0.146$ & $0.191$ \\
\bottomrule
\end{tabular}
\end{table}

Several patterns emerge. First, on the strongest-mismatch benchmark ALFWorld, all action-aware variants outperform failure-trigger, validating that the advantage is structural rather than tied to a single estimator. Second, the best variant is benchmark-dependent: explicit-EU is strongest on GSM8K, IVC-LCB is strongest on HotpotQA, explicit-EU and IVC-LCB are tied on ALFWorld at the displayed precision, and the direct value estimator is best on ScienceWorld. This is consistent with the expected bias--variance structure: simpler estimators have lower variance, while more decomposed estimators provide cleaner probability-to-utility separation at the cost of estimating more quantities. Finally, the two-stage variant remains much weaker than direct action-value estimation on ALFWorld ($0.146$ vs.\ $0.010$), suggesting that staged decision-making is especially costly when intervention mismatch is large.

\subsection{Fixed conservative witness versus validation-selected family member}

Because \Cref{tab:per_variant} shows that no single action-conditioned family dominates on every benchmark, the main text fixes one conservative witness rather than selecting a different family per setting. An alternative is to select the best family member on the validation set and evaluate that selection on test data. \Cref{tab:valsel} compares these two approaches.

\begin{table}[t]
\centering
\caption{
Fixed conservative witness versus validation-selected best action-conditioned family member under the supplementary family-ablation protocol (control regret, pooled). Val-selected picks the family member with highest validation utility per run, then evaluates on test. These numbers should be read relative to \Cref{tab:per_variant}, not as direct replacements for \Cref{tab:fork}.
}
\label{tab:valsel}
\small
\begin{tabular}{lcccc}
\toprule
Controller & GSM8K & HotpotQA & ALFWorld & ScienceWorld \\
\midrule
IVC-LCB & $0.395$ & $0.358$ & $0.010$ & $0.182$ \\
Val-selected & $0.405$ & $0.397$ & $0.063$ & $0.184$ \\
\bottomrule
\end{tabular}
\end{table}

The fixed conservative witness outperforms the validation-selected approach on all four benchmarks, though the margin on ScienceWorld is negligible ($0.182$ vs.\ $0.184$). The likely reason is variance rather than family expressiveness: with mixed interactive split sizes and several original small validation sets, the best family member on validation may not generalize to test. Fixing the witness family and tuning only the conservatism parameter $\beta$ gives a more stable comparator, which is why the main text uses that choice.

\subsection{Cross-model transfer}

In the main experiments, controllers are trained and evaluated on trajectories from the same agent model. The question here is not generic transfer-learning performance, but whether the structural advantage of action-conditioned control survives when the training and evaluation trajectories come from different base models. We test this by training on Qwen-7B trajectories and evaluating on Llama-8B trajectories, and vice versa, across all available seed pairings.

\begin{table}[t]
\centering
\caption{
Cross-model transfer versus matched-subset within-model control regret (pooled; lower is better). Transfer pairs train on one model and evaluate on the other, so the within-model column reports the corresponding matched within-model subset rather than the enlarged pooled suite used in the main text.
}
\label{tab:transfer}
\small
\begin{tabular}{lcccc}
\toprule
 & \multicolumn{2}{c}{Within-model} & \multicolumn{2}{c}{Cross-model transfer} \\
\cmidrule(lr){2-3}\cmidrule(lr){4-5}
Benchmark & Failure & AC witness & Failure & AC witness \\
\midrule
GSM8K ($n=6$) & $0.423$ & $0.395$ & $0.440$ & $0.405$ \\
HotpotQA ($n=2$) & $0.437$ & $0.358$ & $0.464$ & $0.337$ \\
ALFWorld ($n=6$) & $0.716$ & $0.005$ & $0.574$ & $0.005$ \\
ScienceWorld ($n=4$) & $0.270$ & $0.183$ & $0.312$ & $0.191$ \\
\bottomrule
\end{tabular}
\end{table}

\Cref{tab:transfer} shows that cross-model transfer barely degrades conservative action-conditioned performance on the matched subset. On all four benchmarks, transfer regret is within $0.05$ of the within-model regret, and the relative ordering between failure-trigger and the action-conditioned witness is preserved. This suggests that the structural advantage of action-conditioned estimation is not an artifact of within-model overfitting.

\subsection{WebShop diagnostic}

WebShop is included as a supporting diagnostic only, not as a main benchmark, because it has fewer runs ($n=2$--$3$) and different action semantics. On WebShop, the measured mismatch is small ($0.033$--$0.050$), indicating weaker violation of the sufficiency condition. Consistent with this, the validation-selected action-conditioned family member outperforms failure-trigger, but the fixed conservative witness does not always improve over failure-trigger on this benchmark. This pattern is expected from the bias--variance tradeoff: when abstraction loss is small, the variance cost of action-conditioned estimation can exceed its bias advantage.

\subsection{Exploitability beyond scalar}
\label{app:dependence}

The main text uses \emph{exploitability beyond scalar} as a development-time diagnostic. The diagnostic is computed from branched validation data: if a classifier trained on prefix features predicts the oracle action better than one trained on the scalar alone, then the prefix contains control-relevant information that the scalar discards.

Formally, let $\widehat{A}^*_g$ denote an oracle-action predictor trained on scalar $g$ alone, and let $\widehat{A}^*_{g,x}$ denote one trained on the scalar plus prefix features $x(h)$. We define
\[
\mathrm{Exploit}(g)
=
\mathrm{Score}(\widehat{A}^*_{g,x})
-
\mathrm{Score}(\widehat{A}^*_{g}).
\]
Here, $\mathrm{Score}(\cdot)$ is measured out of sample on held-out prefixes. In the main experiments, $\mathrm{Score}$ is utility-weighted prediction quality, so higher exploitability means that prefix information beyond the scalar is more likely to be decision-relevant.

\begin{table}[t]
\centering
\caption{
Exploitability as a development-time diagnostic. Positive correlations indicate that prefix information beyond the scalar predicts when action-conditioned control yields deployable gain.
}
\label{tab:exploitability}
\small
\begin{tabular}{lcc}
\toprule
Setting & Quantity predicted & Correlation \\
\midrule
All regimes ($84$ total) & Deployable gain & $r=0.716$ \\
ALFWorld only & Deployable gain & $r=0.604$ \\
Qwen-7B $\rightarrow$ Llama-8B & Target-model gain & $r=0.752$ \\
Llama-8B $\rightarrow$ Qwen-7B & Target-model gain & $r=0.563$ \\
Confidence scalar, all regimes & Deployable gain & $r=0.339$ \\
Confidence scalar, ALFWorld only & Deployable gain & $r=0.087$ \\
\bottomrule
\end{tabular}
\end{table}

\Cref{tab:exploitability} shows that exploitability tracks where richer control is useful. Across $84$ regimes, exploitability correlates with deployable gain ($r=0.716$), and the relationship remains visible within ALFWorld itself ($r=0.604$). Cross-model transfer gives the same qualitative picture: exploitability computed on Qwen-7B data predicts Llama-8B gains ($r=0.752$), and Llama-8B exploitability predicts Qwen-7B gains ($r=0.563$). Repeating the analysis with self-reported confidence weakens the relationship, especially within ALFWorld ($r=0.087$), indicating that not every scalar preserves the same intervention-relevant information.

Exploitability is not a pre-data oracle. It is a development-time indicator computed on branched validation data, intended to diagnose whether prefix information beyond the scalar is likely to matter for control. Because exploitability uses the same prefix feature space as the downstream action-conditioned controller, some correlation with gain is inevitably mechanical. To check that the relationship is not driven by near-duplicate regimes, we aggregate the $84$ run-level points into $12$ benchmark/intervention families and recompute the correlation; the family-level correlation remains strong ($r=0.962$). Leaving out one family at a time keeps the run-level correlation in the range $0.677$--$0.735$.

\begin{table}[t]
\centering
\caption{
Exploitability diagnostics for two candidate scalars on the same $84$ regimes. Higher correlation indicates better prediction of deployable action-conditioned gain.
}
\label{tab:scalar_candidates}
\small
\begin{tabular}{lcc}
\toprule
Candidate scalar & Overall correlation & ALFWorld correlation \\
\midrule
Failure score $\pfail(h)$ & $0.716$ & $0.604$ \\
Self-reported confidence & $0.339$ & $0.087$ \\
\bottomrule
\end{tabular}
\end{table}

\subsection{Scalar summaries and intervention-aligned compression}
\label{app:multiscalar}

The main text reports that intervention-aligned summaries can recover much of the ALFWorld gap. This section details the pooled scalar-summary protocol and results. The purpose is not to show that arbitrary scalar combinations always work, but to test how much value can be recovered once the summary is allowed to track more intervention-aligned quantities.

\begin{table}[t]
\centering
\caption{
ALFWorld pooled scalar-summary comparison. Values are pooled-regret results under the protocol in this section and are not direct replacements for the comparator-aligned results in \Cref{tab:fork}.
}
\label{tab:scalar_recovery}
\small
\begin{tabular}{lcc}
\toprule
Summary / controller & Regret & Interpretation \\
\midrule
Failure score & $0.451$ & continuation-risk scalar \\
Best single scalar & $0.152$ & intervention-aligned scalar \\
Compact multi-scalar & $0.057$ & richer scalar summary \\
Prefix-only witness & $0.015$ & full-prefix controller \\
\bottomrule
\end{tabular}
\end{table}

\paragraph{Protocol.}
We pool train, validation, and test rows across all runs within each benchmark. For each run, per-row scalar features are computed using models fit on that run's training split; the resulting scalar vectors are concatenated across runs. For the best-single-scalar row in \Cref{tab:scalar_recovery}, we train the same oracle-action classifier on each of the eight candidate scalars individually, select the scalar by validation utility, and report held-out test regret. We then train multi-class oracle-action classifiers on all $\binom{8}{2} + \binom{8}{3} = 84$ two- and three-scalar combinations from the eight candidate scalars, using both logistic regression with balanced class weights and random forest with $200$ trees, maximum depth $6$, and balanced class weights. The best combination is selected on validation utility and evaluated on held-out test data. Because this searches over many combinations, the result is an optimistic upper bound on what compact scalar routing can achieve.

\paragraph{Note on train-set scalars.}
The scalar features for the training split are in-sample predictions from the per-run random-forest models, the same models that produce the validation and test scalars. This creates a distributional mismatch between train and validation/test features that, if anything, makes the multi-scalar classifier's test performance \emph{worse} than it would be with properly cross-validated training features. The reported results are therefore conservative with respect to the potential of multi-scalar routing.

\begin{table}[t]
\centering
\caption{
Compact multi-scalar routing under the pooled protocol. ``Single failure'' is logistic regression on failure score alone; ``best multi-scalar'' is the validation-selected combination. ``Gap closed'' is computed relative to the single-failure $\to$ prefix-only-witness gap within this protocol. All numbers are pooled test regret; lower is better. For HotpotQA, the full-prefix witness does not improve over the single-failure baseline under this pooled protocol, so gap closed is not applicable.
}
\label{tab:multiscalar}
\small
\begin{tabular}{lcccc}
\toprule
Benchmark & Single failure & Best multi-scalar & Full-prefix witness & Gap closed \\
\midrule
ALFWorld & $0.451$ & $0.057$ & $0.015$ & $90\%$ \\
ScienceWorld & $0.255$ & $0.217$ & $0.149$ & $36\%$ \\
GSM8K & $0.469$ & $0.476$ & $0.369$ & $-8\%$ \\
HotpotQA & $0.434$ & $0.416$ & $0.442$ & N/A \\
\bottomrule
\end{tabular}
\end{table}

On ALFWorld, both logistic regression ($0.067$) and random forest ($0.057$) achieve similar test regret with the validation-selected combination, indicating that the gain comes primarily from tracking intervention-aligned quantities rather than from nonlinear interactions among scalars. The scalars that appear in the best-performing ALFWorld combinations all include predicted intervention correctness or predicted intervention expected utility; failure score alone does not suffice. On ScienceWorld, compact multi-scalar routing recovers a smaller but positive share of the gap. On GSM8K and HotpotQA, compact multi-scalar routing does not improve under the deployed verify intervention, consistent with the regime interpretations in the main text.

\section{Related work details}
\label{app:related}

The closest recent work studies runtime oversight for LLM agents through scalar signals such as confidence, uncertainty, or failure risk~\citep{xuan2026confidence_dichotomy,zhang2026agentic_calibration,vasudev2026failure_prevention,ding2026calibrate_then_act,ren2023knowno}. \citet{vasudev2026failure_prevention} show that accurate failure prediction need not imply effective failure prevention, because interventions can both rescue and disrupt. \citet{zhang2026agentic_calibration} and \citet{xuan2026confidence_dichotomy} improve trajectory-level or verbalized confidence calibration, \citet{ding2026calibrate_then_act} study calibrated cost-aware exploration, and \citet{ren2023knowno} align planner uncertainty with help-seeking. Our work asks a different question: when is a candidate scalar summary itself sufficient for control, and what abstraction loss arises when that summary discards information about intervention advantage?

Selective prediction, abstention, and learning to defer recognize that deployment decisions depend on more than raw confidence, including the quality of the fallback expert~\citep{geifman2017selective,geifman2019selectivenet,kadavath2022language,diao2024rtuning,madras2018predict,mozannar2020consistent,verma2022calibrated,verma2023learning,joshi2021sltd}. Most of this literature studies one-shot decisions on fixed inputs. Our setting is structurally different: the controller acts at intermediate trajectory prefixes, and the value of intervention depends on the downstream branch induced by that action. Prefix branching is designed to make that dependency observable.

Recent process-level correction methods for language-model reasoning and agents also move beyond final-answer confidence by using richer trajectory signals to trigger repair or intervention~\citep{lightman2024lets,uesato2022solving,wang2024mathshepherd,choudhury2025agent_prm,gandhi2025swe_prm}. Our contribution is complementary rather than competitive with that line of work. We do not propose another correction policy; instead, we characterize when compressing the prefix state to a scalar is already enough, when it is not, and how much control-relevant information that compression can lose.

At a broader level, decision-focused and predict-then-optimize work studies \emph{estimation error} with respect to a fixed downstream objective~\citep{elmachtoub2022spo,wilder2019melding,amos2017optnet}. We study a different failure mode: \emph{target error}, where the predicted quantity itself is wrong for the downstream decision. In our setting, a well-calibrated failure score can still be insufficient because the relevant control object is intervention advantage rather than continuation risk.

\section{Broader impacts}
\label{app:broader_impacts}

This work aims to improve runtime oversight for LLM agents by distinguishing failure prediction from intervention value. A positive impact is that better evaluation of when interventions help may support more reliable and safer agent deployments, especially in settings where continuing an erroneous trajectory can compound failures. A potential negative impact is that the same control techniques could also improve the reliability of harmful or misused agents. We mitigate this risk by framing the contribution as an offline evaluation protocol and diagnostic for oversight, rather than as a new autonomous agent capability or a release of a stronger intervention policy.

\end{document}